\newcommand{\ie}{i.e.,\ }
\newcommand{\eg}{e.g.,\ }
\newcommand{\wrt}{w.r.t.\xspace}
\def\atantwo{\mathop{\rm atan2}}
\def\sgn{\mathop{\rm sgn}}
\newcommand{\norm}[1]{\left\lVert#1\right\rVert}
\newlength{\figureheight}
\newcommand{\etal}{\xspace{}et al.\xspace}
\newcommand{\reffig}[1]{Fig.~\ref{#1}}
\newcommand{\reftab}[1]{Tab.~\ref{#1}}
\newtheorem{corollary}{Corollary}
  \sffamily\fontsize{9}{12}\selectfont 
\title{\LARGE \bf
Search-based 3D Planning and Trajectory Optimization for Safe\\Micro Aerial Vehicle Flight Under Sensor Visibility Constraints
}
\author{Matthias Nieuwenhuisen$^{1}$ and Sven Behnke$^{2}$% <-this % stops a space
\thanks{This work was partially funded by the German Research Foundation (DFG) under grant BE 2556/8-2.}% <-this % stops a space
\thanks{$^{1}$M. Nieuwenhuisen is with the Fraunhofer Institute for Communication, Information Processing and Ergonomics FKIE, Wachtberg, Germany.
  {\tt\small nieuwenh@ais.uni-bonn.de}}%
\thanks{$^{2}$S. Behnke is with the Autonomous Intelligent Systems Group, Computer Science VI, University of Bonn, Germany.}%
}
\begin{document}

\maketitle
\thispagestyle{fancy}
\pagestyle{empty}

%%%%%%%%%%%%%%%%%%%%%%%%%%%%%%%%%%%%%%%%%%%%%%%%%%%%%%%%%%%%%%%%%%%%%%%%%%%%%%%%
\begin{abstract}
Safe navigation of Micro Aerial Vehicles (MAVs) requires not only obstacle-free flight paths according to a static environment map, but also the perception of and reaction to previously unknown and dynamic objects.
This implies that the onboard sensors cover the current flight direction.
Due to the limited payload of MAVs, full sensor coverage of the environment has to be traded off with flight time.
Thus, often only a part of the environment is covered.

We present a combined allocentric complete planning and trajectory optimization approach taking these sensor visibility constraints into account.
The optimized trajectories yield flight paths within the apex angle of a Velodyne Puck Lite 3D laser scanner enabling low-level collision avoidance to perceive obstacles in the flight direction.
Furthermore, the optimized trajectories take the flight dynamics into account and contain the velocities and accelerations along the path.

We evaluate our approach with a DJI Matrice 600 MAV and in simulation employing hardware-in-the-loop.
\end{abstract}

%%%%%%%%%%%%%%%%%%%%%%%%%%%%%%%%%%%%%%%%%%%%%%%%%%%%%%%%%%%%%%%%%%%%%%%%%%%%%%%%
\section{Introduction}
The environments in which micro aerial vehicles (MAVs) operate become more challenging with new applications, \eg indoor and disaster response operations.
These scenarios prohibit the optimistic assumption that direct flight paths are obstacle-free at a certain altitude that can be reached.
Furthermore, the assumption that the environment is static cannot be made in the presence of human or machine activities, or when the structural integrity of a building cannot be assured.
Hence, continuous monitoring of the environment that is traversed and quick reaction to unforeseen obstacles is key to safe and collision-free flights.
In our own previous work~\cite{nieuwenhuisen2015jint}, we have presented a system that follows planned paths incorporating a potential field-based safety layer that avoids unknown obstacles based on a laser-based 3D map acquired with \SI{2}{\hertz}~\cite{nieuwenhuisen2013isprs}.

To increase the safe flight speed, a faster perception of the environment for localization and obstacle avoidance is inevitable.
Modern lightweight 3D laser scanners as the Velodyne Puck Lite acquire 3D point clouds of the environment at a high frequency.
Nevertheless, this comes at a price: The new laser scanner setup does only cover a vertical field-of-view (FoV) of \SI{30}{\degree} in contrast to the \SI{180}{\degree} of our old omnidirectional laser scanner.
This raises the requirement for alternate paths that let the MAV only move within the FoV of the scanner to reliably detect obstacles that impose a collision hazard.  
Similar considerations have to be taken into account when using camera or radar sensors.
These sensors furthermore require that the MAV is only flying into forward direction of the sensor.

\begin{figure}[t]
  \includegraphics[width=\linewidth]{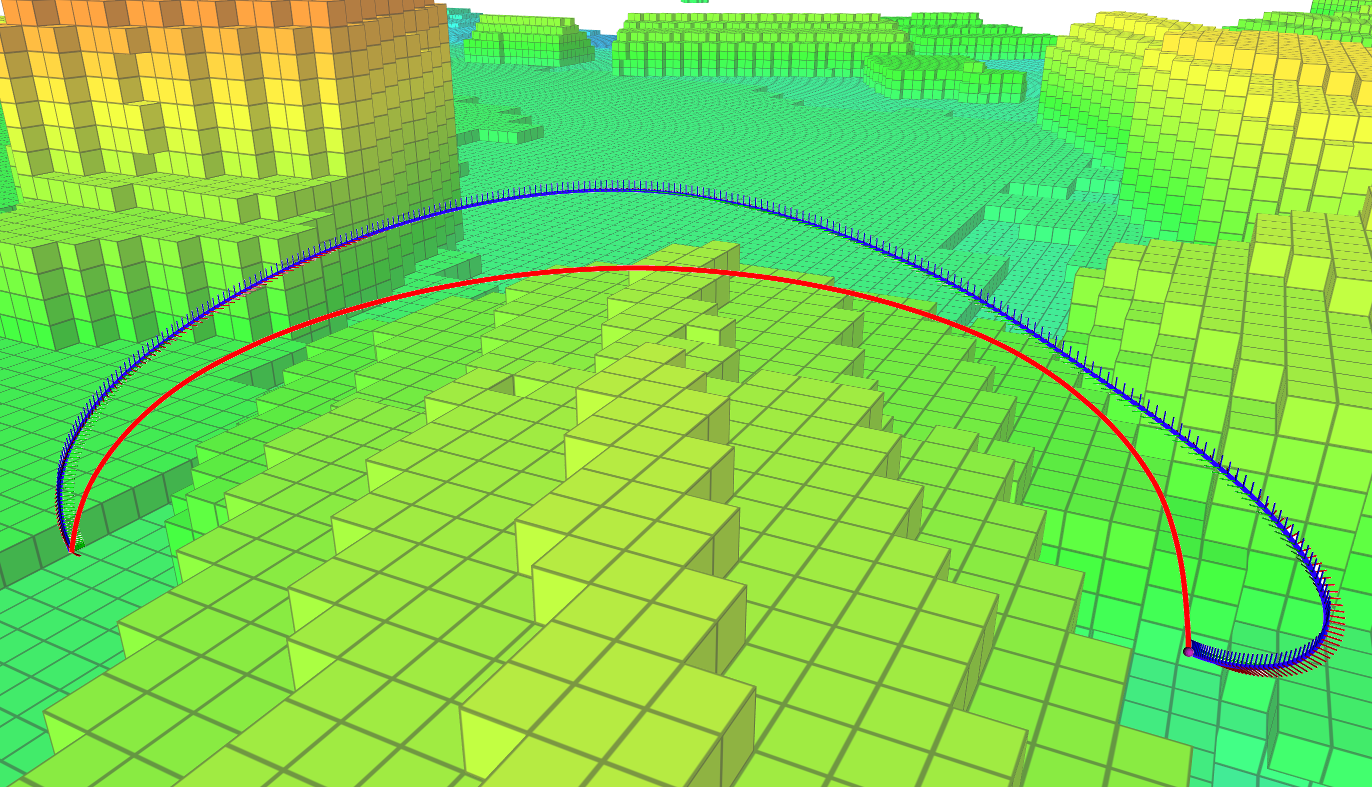}
  \caption{Optimized trajectories without (red) and with visibility constraints (blue). The constrained trajectory lunges out to keep the ascent and descent angles within the field of view of the onboard sensors. The flight starts on the left.}
  \label{fig:teaser}
  \vspace{-3ex}
\end{figure}

One option would be to define motion patterns for ascent and descent that ensure the perception of the flight path and to plan at fixed altitudes in-between.
This yields far-from-optimal flight paths, though.
We follow a two-layered approach to generate allocentric obstacle-free paths.
An allocentric path planner finds obstacle-free paths in a global map that respects the visibility constraints locally.
This path is further refined to a dynamically feasible trajectory in a second step---preserving the sensor coverage property.
\reffig{fig:teaser} shows resulting trajectories with and without visibility constraints after the optimization step.
To reach the target position close to a building, our approach generates a spiraling descent.
The resulting trajectories contain velocity and acceleration information employed by our low-level controller to accurately follow the intended paths.

To show the applicability of our approach, we analyze it qualitatively and evaluate it with a hardware-in-the-loop simulator resembling a DJI Matrice 600 MAV and with the real MAV in an outdoor scenario.

Our main contributions are:
\begin{compactitem}
  \item a search-based planning representation that takes visibility constraints into account for either image-based sensors or sensors covering an inverse double-conic volume,
  \item an accordingly adapted heuristic that speeds up informed path planners as variants of A* and D*, and
  \item a trajectory optimization that refines the planned paths to smooth, dynamically feasible trajectories.
\end{compactitem}

\section{Related Work}
\label{sec:related-work}
To plan high-dimensional trajectories, often sampling-based planners are employed, including KPIECE~\cite{csucan2009kinodynamic} and randomized kinodynamic planning~\cite{lavalle2001randomized}.
In addition to those sampling-based motion planning algorithms, trajectory optimization allows for efficient generation of high-dimensional trajectories.
Covariant Hamiltonian Optimization and Motion Planning (CHOMP) is a gradient-based optimization algorithm proposed by Ratliff et al.~\cite{Zucker_2013_7421}.
It uses trajectory samples, which initially can include collisions, and performs a covariant gradient descent by means of a differentiable cost function to find an already smooth and collision-free trajectory. 
A modified version of the stochastic optimizer STOMP~\cite{stomp} has been used for multicriteria optimization~\cite{pavlichenko2017iros}.
The optimized criteria include, in addition to obstacle costs, the trajectory duration and joint limits, but no sensor visibility constraints.
Another algorithm derived from CHOMP is ITOMP, an incremental trajectory optimization algorithm for real-time replanning in dynamic environments~\cite{itomp}.
In order to consider dynamic obstacles, conservative bounds around them are computed by predicting their velocity and future position.
Since fixed timings for the trajectory waypoints are employed and replanning is done within a time budget, generated trajectories may not always be collision-free.

Augugliaro \etal~\cite{augugliaro2012generation} compute collision-free trajectories for multiple MAVs simultaneously. Obstacles other than the MAVs are not considered.
Similar to our approach, Richter \etal~\cite{richter2013polynomial} plan MAV trajectories in a low dimensional space (using RRT*) and optimize the trajectory with a dynamics model afterwards to achieve short planning times.
Our approach does not have the constraint that the optimized path has to include the planned waypoints.
Another approach using optimization by means of polynomial splines between waypoints focuses on time-optimal trajectories computed in real-time~(Bipin \etal~\cite{bipin2014autonomous}).
Collisions are avoided by intermediate waypoints from a high-level planner and are not explicitly considered in the optimization process.
Andreasson \etal~\cite{drivethedrive} employ optimization to compute steerable trajectories for automated ground vehicles. 
Oleynikova et al.~\cite{oleynikova2016} optimize trajectories with continuous timings by employing polynomials.
Fang et al.~\cite{fang2017} add a global planning layer to initialize trajectory optimization---similar to our approach, but their planning layer is restricted to 2D. In contrast, we plan in 3D space with visibility constraints.

Majumdar and Tedrake~\cite{majumdar2016} use compositions of preprocessed trajectories to generate flight paths that are safe under uncertainty in real-time.
In contrast, we frequently reoptimize a trajectory in real-time to react on changes in the environment and uncertain path execution.
Zhang et al.~\cite{zhang2018iros} generate a set of dynamically feasible paths prior to a flight and quickly select suitable ones based on sensor input during the flight.
They do not consider sensor visibility constraints.
Richter and Roy~\cite{richter2016learning} plan trajectories for wheeled robots that reduce unknown space in the direction of travel to achieve higher safe velocities.

Many approaches address the problem of planning sensor poses, \eg \cite{englot2010iros,stefas2018icra,dang2018}.
In contrast to our work, they aim at covering allocentric areas of interest not necessarily in the direction of flight.
We aim at covering egocentric areas of interest that move together with the MAV.
Complementary to our approach is the planning for configuring the sensor FoV to cover a safety volume based on a given flight path~\cite{arora2015}.

\section{Problem Specification}
For MAV navigation, we aim at smooth and safe trajectories.
Smoothness allows for fast, continuous trajectory following by the low-level controllers without unnecessary stopping.
Trajectories are safe if they stay in a safe distance to known obstacles and if unknown obstacles can be reliably perceived by obstacle avoidance sensors.
The first objective can be achieved with trajectory optimization \wrt the vehicle dynamics---in our case, we optimize for low acceleration costs.
Trajectory safety is ensured by adding visibility constraints and obstacle costs to our objective function:
The trajectory ascent and descent angles should stay within the vertical FoV of our obstacle sensor.
To avoid unfeasible local minima when optimizing the trajectory, we follow a two-tier approach:
First, we plan an obstacle-free path incorporating the visibility constraints by graph-search which is complete and optimal \wrt the planning specification.
In the second step, we optimize this planned trajectory with CHOMP~\cite{Zucker_2013_7421} for a tailored objective function.

\section{Path Planning}
For initial path planning, we employ A* graph-search on a representation based on a modified regular grid.
The general case of finding obstacle-free shortest paths is straightforward with a cost function modeling the distance from graph nodes to nearest obstacles.
We extend this approach to visibility-constrained planning by modifying the planning representation and adapting the employed heuristic.
For the case of a vertical apex angle $\phi$ of the obstacle sensors smaller than \SI{90}{\degree}, the angular resolution of a uniform voxel grid of \SI{45}{\degree} is too coarse to represent the allowed maximum ascent and descent angles of $\frac{\phi}{2}$.
In our case, the apex angle of the lidar is \SI{30}{\degree} requiring an angular resolution of \SI{15}{\degree}.
To increase the angular resolution, we employ an anisotropic voxel grid with horizontal edge lengths of $v_{xy}$ and a voxel height of $v_z = \tan(\frac{\phi}{2}) v_{xy}$.
From the 26 edges connecting nodes centered in the voxels of the grid with their Moore neighborhood, we remove the two edges connecting voxels directly above or below the current voxel.
The resulting graph structure enforces restricted ascents/descents within the FoV of the sensors.

To penalize frequent changes in the flight direction, we introduce the direction of flight as additional planning dimension.
Without this penalty, a zigzag motion to ascent or descent would be equal to larger straight glide paths in path costs, but would significantly slow down the MAV due to numerous stops to change direction.
The direction of flight is discretized to the eight possible transitions in the plane, angles of up to \SI{45}{\degree} are not penalized.
We remove edges yielding larger changes in direction, thus, these transitions are still possible, but at the cost of multiple intermediate transitions.
\reffig{fig:constrained_planning} illustrates the resulting plans with and without visibility constraints.
The MAV orientation in the planned path depends on the mode.
If paths for MAVs with front-facing sensors, \eg cameras, are planned, the MAV orientation is coupled to the flight direction dimension.
Thus, the MAV yaw angle is linearly interpolated along plan edges such that the angle between the front of the MAV and the current flight direction is at most \SI{45}{\degree} and arrives at a difference of \SI{0}{\degree} when the next waypoint is reached.
For sensors with a horizontal FoV of at least \SI{90}{\degree}, the full path segment between waypoints is guaranteed to be visible.
Sensors with a smaller FoV require rotating the MAV in place until the path segment is in the sensor FoV before starting with the position interpolation.
In omnidirectional mode, the yaw orientation of the MAV can be freely set to mission requirements and the flight direction dimension is solely restricting sudden direction changes.

\begin{figure}[t]
  \centering
  \includegraphics[width=0.35\linewidth,clip,trim=0 140 0 50]{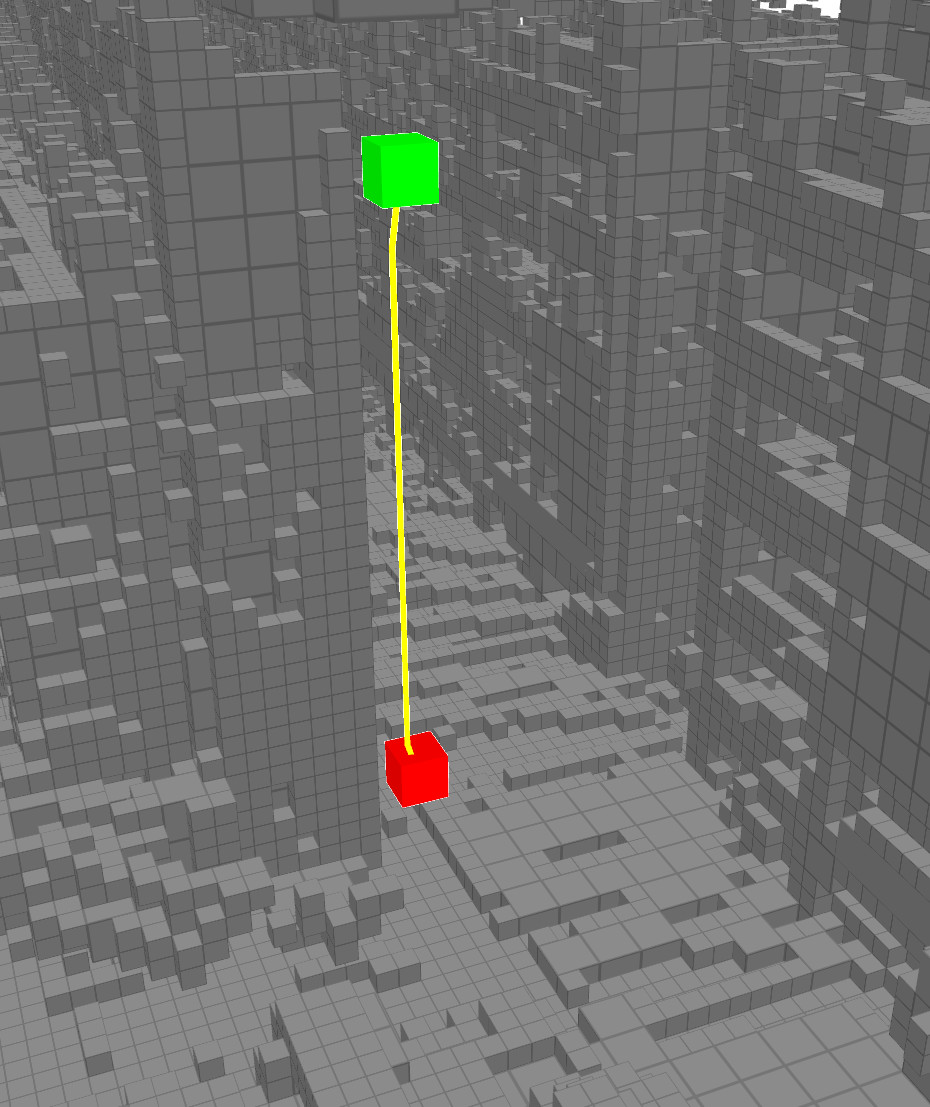}~
  \includegraphics[width=0.35\linewidth,clip,trim=0 140 0 50]{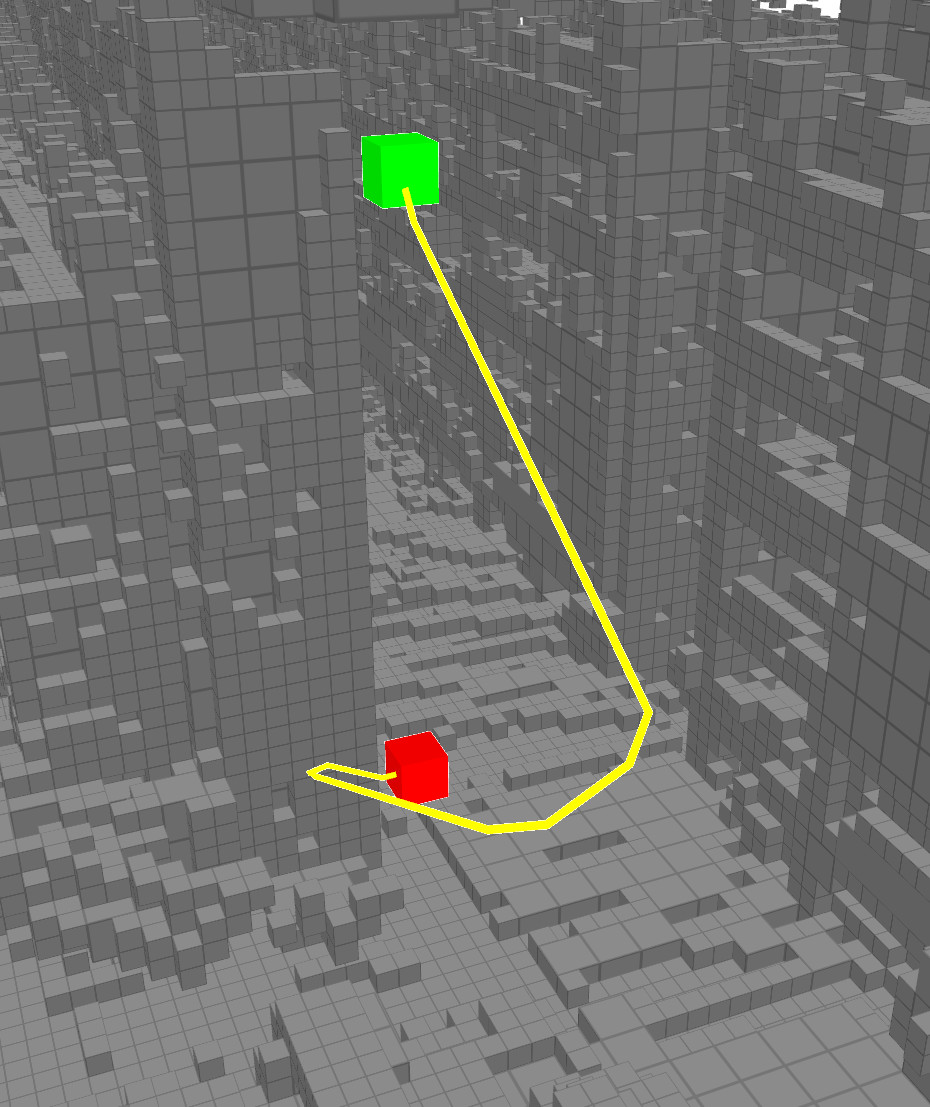}
  \vspace{-1ex}
  \caption{Planning under visibility constraints. Left: Without visibility constraints, the shortest path (yellow) from a start (green) to a target position (red) below solely descents in place. Right: With visibility constraints, the MAV has to move within the field of view of the lidar and consequently follows a longer descent path with an angle of \SI{15}{\degree}.}
  \label{fig:constrained_planning}
\end{figure}

\begin{figure}
  \centering
  \includegraphics[width=0.7\linewidth]{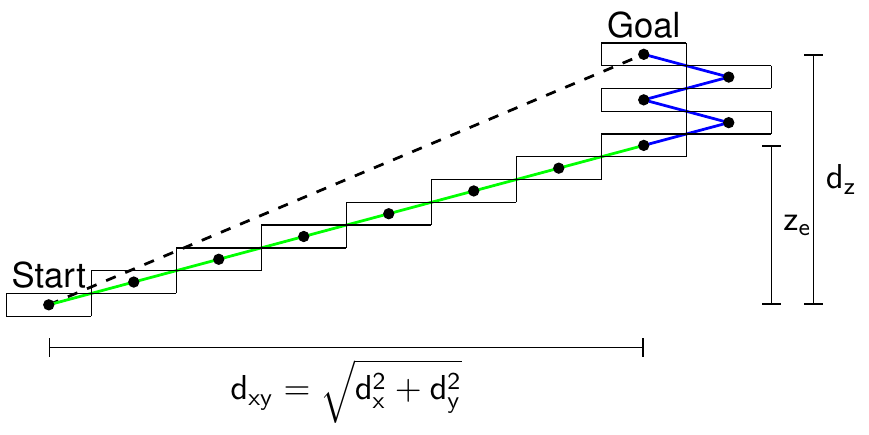}
  \vspace{-3ex}
  \caption{The Euclidean distance (dashed line) underestimates the path length in our planning representation.
  We split the heuristic into two parts: i) an Euclidean part (green) to the closest point to the goal the MAV can reach on a straight line given the sensor constraints; ii) an estimate for the shortest possible path for the remaining vertical distance to the goal $|d_z| - z_e$ (blue).}
  \label{fig:heuristic-scheme}
  \vspace{-4ex}
\end{figure}

As the Euclidean distance heuristic strongly underestimates altitude changes, we employ a modified heuristic better suited for our planning structure.
\reffig{fig:heuristic-scheme} illustrates the idea.
For a node position $p_n$ and a target position $p_t$, we define the heuristic $h(p_n - p_t) = h(d)$ on the position difference $d$ as

\begin{align}
  h(d) & = \sqrt{d_x^2 + d_y^2 + z_e^2} + z_z,\\
  z_e & = \min{(|d_z|,\tan{\frac{\phi}{2}}\sqrt{d_x^2+d_y^2})},\label{eq:heuristic_eucl}\\
  z_z & = \frac{\max{(0,|d_z|-z_e)}}{v_z} \sqrt{v_{xy}^2 + v_z^2},\label{eq:heuristic_zigzag}
\end{align}
where $z_e$ is the slope-restricted Euclidean altitude change that is possible over a distance $\sqrt{d_x^2 + d_y^2}$ with maximum angle $\phi / 2$;
$z_z$ is the shortest possible detour to overcome the remaining altitude difference.

\begin{corollary}
  The heuristic $h(.)$ is an admissible heuristic for A* search in the visibility constrained graph structure.
\end{corollary}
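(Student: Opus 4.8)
The plan is to prove the stronger fact that, for every node at position $p_n$ with $d = p_n - p_t$, $h(d)$ is exactly the length of a shortest path from $p_n$ to $p_t$ in the continuous relaxation in which the MAV may move arbitrarily in $\mathbb{R}^3$ subject only to the FoV slope bound $|\dot z| \le \tan(\tfrac{\phi}{2})\,\|(\dot x,\dot y)\|$. Since every edge of the discrete grid is such a slope-bounded segment, and since each search edge costs at least the Euclidean length of that edge (the geometric length is augmented, not reduced, by the obstacle-proximity term, the direction-change penalty, and the auxiliary flight-direction dimension), this relaxation distance underestimates the true cost-to-go, which gives admissibility; and being the exact cost-to-go of a relaxation whose edge costs never exceed those of the original graph, $h$ is in fact also consistent.

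First I would isolate the geometry the construction guarantees. Because the voxels have aspect ratio $v_z = \tan(\tfrac{\phi}{2})\,v_{xy}$ and the two purely vertical Moore edges are removed, every remaining altitude-changing edge moves at least one voxel horizontally, so every edge satisfies $|v_i| \le \tan(\tfrac{\phi}{2})\,h_i =: t\,h_i$, where $h_i \ge 0$ and $v_i$ are its horizontal and vertical components. Along any feasible path $\gamma$ the horizontal components sum to at least $r := \sqrt{d_x^2 + d_y^2}$ and the signed vertical components sum to $d_z$, so $\sum_i |v_i| \ge |d_z|$. From this I would derive two lower bounds on the geometric length $L(\gamma) = \sum_i \sqrt{h_i^2 + v_i^2}$: (A) by the vector triangle inequality and projection onto the coordinate axes, $L(\gamma) \ge \sqrt{r^2 + d_z^2}$; (B) since $h_i \ge |v_i|/t$ forces $\sqrt{h_i^2 + v_i^2} \ge |v_i|\sqrt{1+t^2}/t$, summation gives $L(\gamma) \ge \tfrac{\sqrt{1+t^2}}{t}\,|d_z|$. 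Bound (A) is the ``Euclidean part'' of \reffig{fig:heuristic-scheme}; bound (B) is the quantitative form of the detour forced by the leftover vertical distance.

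The remaining step is the algebraic identity $h(d) = \max\!\big(\sqrt{r^2 + d_z^2},\ \tfrac{\sqrt{1+t^2}}{t}\,|d_z|\big)$, obtained by splitting on the defining cases of $z_e$ in \eqref{eq:heuristic_eucl}: if $|d_z| \le t r$ then $z_e = |d_z|$, $z_z = 0$, and $h(d) = \sqrt{r^2 + d_z^2}$; if $|d_z| > t r$ then $z_e = t r$, and substituting $v_z = t\,v_{xy}$ into $z_z$ collapses $h(d)$ to $\tfrac{\sqrt{1+t^2}}{t}\,|d_z|$; in each case the stated maximum is attained and dominates the other term. Combining with bounds (A) and (B), $h(d) \le L(\gamma) \le \mathrm{cost}(\gamma)$ for every feasible $\gamma$, which is admissibility.

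I expect the main obstacle to be the careful verification of the first structural fact --- that the anisotropic grid together with only the two deleted vertical edges really caps the slope of \emph{every} remaining Moore edge (face- and space-diagonals included) at $\tan(\tfrac{\phi}{2})$ --- together with the bookkeeping showing that the auxiliary flight-direction dimension and the obstacle/direction-change terms can only raise an edge's search cost above its geometric length, so that the position-space relaxation bound is legitimately a lower bound on the true search cost.
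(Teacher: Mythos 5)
Your proof is correct, and it takes a genuinely different route from the paper's in the case that matters. The paper splits on $|d_z| \lessgtr \tan(\frac{\phi}{2})\sqrt{d_x^2+d_y^2}$ exactly as you do, and the first case (pure Euclidean) is identical; but in the second case the paper argues \emph{additively and constructively}: it identifies the closest point reachable on a straight line under the slope constraint, and then asserts that the leftover altitude $|d_z|-z_e$ ``can only be eliminated'' by traversing $\frac{|d_z|-z_e}{v_z}$ diagonal voxel edges, so that $z_z$ ``has to be added'' to the Euclidean leg. That additive decomposition implicitly assumes an arbitrary competing path factors into those two pieces, which is not literally true and is the weakest step of the published argument. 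You instead prove two \emph{independent} lower bounds on the geometric length of any feasible path --- the Euclidean bound $\sqrt{r^2+d_z^2}$ and the slope-induced bound $\frac{\sqrt{1+t^2}}{t}|d_z|$ with $t=\tan(\frac{\phi}{2})$ --- and verify the algebraic identity that $h(d)$ equals their maximum (your computation that case two collapses to $\frac{\sqrt{1+t^2}}{t}|d_z|$ checks out, using $v_z = t\,v_{xy}$ in (\ref{eq:heuristic_zigzag})). This buys you a rigorous admissibility proof without any claim about the shape of the optimal path, plus consistency essentially for free, since $h$ is the exact distance in a relaxation with dominated edge costs. The price is that you must carry two side conditions the paper leaves implicit: that every surviving Moore edge of the anisotropic grid (including the face and space diagonals, whose slopes are $t$ and $t/\sqrt{2}$ respectively) obeys the slope cap, and that the obstacle, direction-change, and flight-direction-dimension contributions only ever increase an edge's search cost above its geometric length. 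Both hold for the construction in the paper, so your argument goes through; I would only ask you to state explicitly that the purely vertical edges are the \emph{only} edges with $h_i=0$, so the inequality $|v_i|\le t\,h_i$ is never vacuously violated.
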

\begin{proof}
  In the first case
  \begin{equation}
  |d_z| \leq \tan{\left(\frac{\phi}{2}\right)}\sqrt{d_x^2+d_y^2}
  \end{equation}
  follows that $z_e = |d_z|$ and (\ref{eq:heuristic_zigzag}) vanishes.
  The remaining heuristic term
  \begin{equation}
    h(d) = \sqrt{d_x^2+d_y^2+d_z^2} = ||d||_2
  \end{equation}
  is then the Euclidean distance which is an admissible heuristic.

  In the other case, $z_e$ is the maximum altitude change that is possible with the allowed ascent angle, \ie $p_n + (d_x, d_y, z_e)$ is the closest point to $p_t$ that can be reached on a straight line from $p_n$ without violating the angular constraint.
  All points closer to the target $p_t$ in $z$ would increase the distance in the x-y-plane with factor $1 / \tan{(\phi / 2)}$ which is $>1$ following the assumption that $\phi \leq 90^{\circ}$ for this case and (\ref{eq:heuristic_eucl}).
  The remainder $z_r = |d_z|-z_e$ can only be eliminated by an ascent through $\frac{z_r}{v_z}$ voxels with edge length $\sqrt{v_{xy}^2 + v_z^2}$ each resulting in the value of $z_z$ which has to be added to the distance to the closest point.
  Thus, no shorter path exists and the heuristic is admissible in both cases.
\end{proof}

\begin{figure}
  \centering
  \setlength{\figureheight}{0.27\linewidth}
  \includegraphics[height=\figureheight]{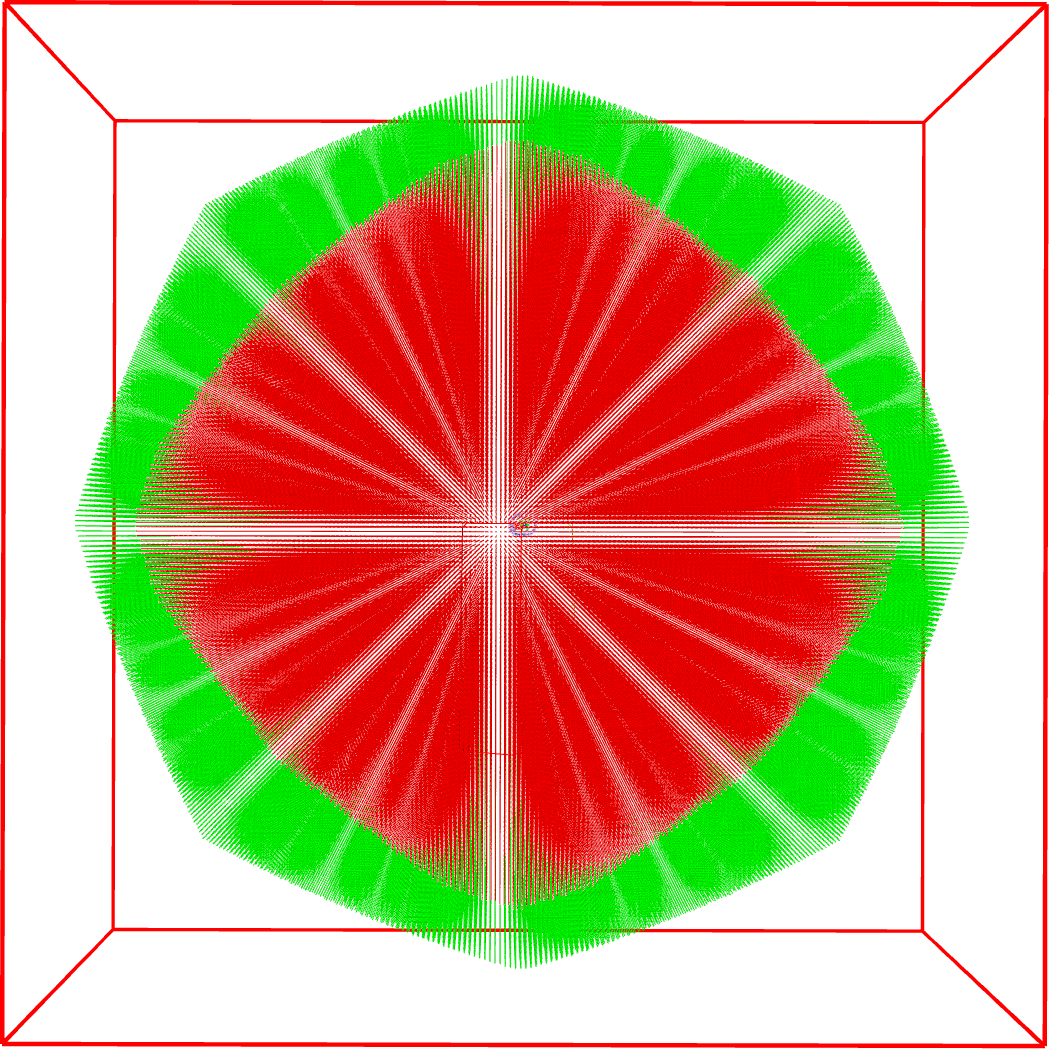}\,
  \includegraphics[height=\figureheight,trim=4cm 0 4cm 0,clip]{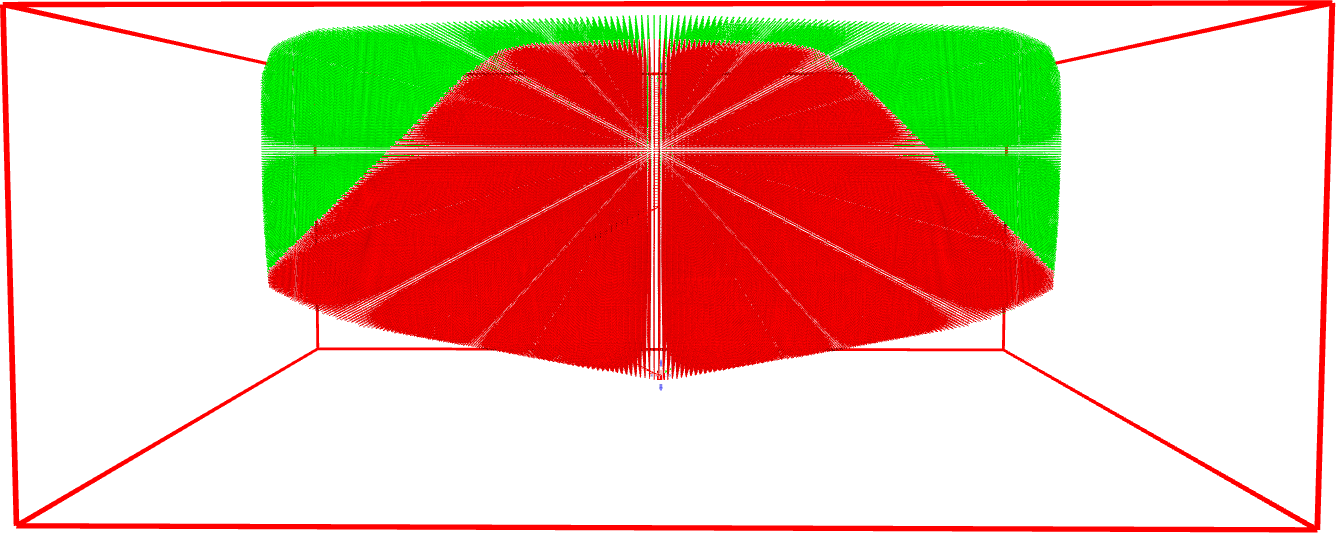}
  \caption{Visibility constraint planning heuristic. A standard Euclidean distance heuristic strongly underestimates the cost of altitude changes in the grid. This results in more unnecessary node expansions (green). Our modified heuristic expands fewer nodes (red). Left: Top-view. Right: Side-view. Red lines depict the planning volume.}
  \label{fig:heuristic}
  \vspace{-3ex}
\end{figure}

\reffig{fig:heuristic} shows the difference in node expansions with and without our modified heuristic for the path depicted in \reffig{fig:constrained_planning}

To speed up the node expansions without the requirement to process and store the whole graph structure in advance, our planner employs look up tables (LUT) for edge costs and possible angular transitions.
Furthermore, obstacle costs per vertex are cached in a lower dimension grid until they are invalidated by map updates.

To smooth the path, we replace parts of it with continuous curvature transition segments~\cite{nieuwenhuisen2016iros}.
The transition segments mitigate discontinuities in the derivatives of the path without violating obstacle constraints due to their convexity.
This yields dynamically smoother paths which are further refined in the following trajectory optimization step.

\section{Trajectory Optimization}
We re-discretize the path according to a simple analytical motion model with acceleration bounds to a \SI{10}{\hertz} time resolution.
The planned path is a timingless list of 4D (x, y, z, yaw) spatial waypoints.
To generate smooth trajectories for our MAV, we need poses and velocities as input for the underlying model predictive controller (MPC)~\cite{beul2017icuas}.
For accurate trajectory following, we have to optimize the trajectory for low acceleration control costs.
Consequently, outputs of our trajectory optimization step are time-discretized 12D trajectories with 4D poses, velocities, and accelerations without discontinuities.
Accordingly, the goal is to find a trajectory, which minimizes the costs calculated by a predefined cost function.
Similar to our existing approach~\cite{nieuwenhuisen2016iros}, the trajectory optimizer gets a start and a goal configuration $x_{0} = (p^x_0, p^y_0, p^z_0, \theta_0)^\top, x_{N} = (p^x_{N}, p^y_{N}, p^z_{N}, \theta_{N})^\top \in \mathbb{R}^4$ as input.
The output of the algorithm is a trajectory $\Theta \in \mathbb{R}^{4\times (N+1)}$ consisting of one trajectory vector $\Theta^d = (x^d_0, \dots, x^d_N)^\top \in \mathbb{R}^{N+1}$ per dimension $d$, discretized into $N+1$ waypoints.
The optimization problem we solve iteratively is defined by
\begin{equation}
  \min\limits_{\Theta}\left[\sum\limits^N_{i=0}q(\Theta_i) + \sum\limits_d\frac{1}{2}{\Theta^d}^\top\mathbf{R}\Theta^d\right].
\end{equation}

\begin{figure}[t]
  \centering
  \resizebox{0.8\linewidth}{!}{
    \input{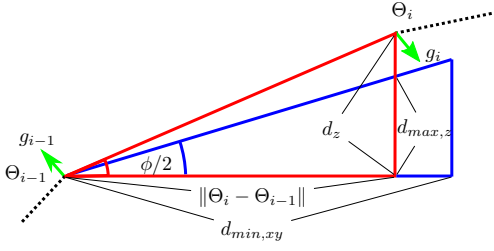}
  }
  \caption{If the ascent (or descent) angle between two consecutive waypoints $\Theta_{i-1}$ and $\Theta_i$ is out of the sensor FoV $\phi$, this case is depicted by the red triangle, then the altitude change $d_z$ is modified such that the constraint violation $d_z - d_{max,z}$ is reduced to half. The remaining violation is mitigated by stretching the planar projection of the movement $\Theta_{i,xy} - \Theta_{i-1,xy}$ to reduce the difference to $d_{min,xy}$. Thus, the trajectory becomes reshaped towards the blue triangle by the gradients $g_{i-1}, g_i$.}
  \label{fig:flattening}
  \vspace{-3ex}
\end{figure}

The state costs---obstacle costs, velocity and visibility constraints---are calculated by a predefined cost function $q(\Theta_i)$ for each state in $\Theta$.
${\Theta^d}^\top\mathbf{R}\Theta^d$ is the sum of control costs along the trajectory in dimension $d$ with $\mathbf{R}$ being a matrix representing control cost weighting.
The trajectory optimizer now attempts to solve the defined optimization problem by means of the gradient-based optimization method CHOMP~\cite{Zucker_2013_7421}.
The cost function $q(\Theta_i)$ is a weighted sum of I) piece-wise linear increasing costs $c_o$ induced by the proximity to obstacles, II) squared costs $c_a$ caused by acceleration limits, III) squared costs $c_v$ caused by velocity constraints, and IV) costs from violating visibility constraints.
The obstacle costs $c_o$ increase linearly with a slope $o_{\textrm{far}}$ from a maximum safety distance to an inflated minimum distance to the obstacle.
From the inflated minimum distance to the obstacle, costs increase with a steeper slope $o_{\textrm{close}}$ to allow for gradient computation in the vicinity of obstacles.

Velocities and accelerations as derivatives of the state are implicitly modeled by the duration between discretization steps.
The trajectory optimization converges faster when the initialization is close to the (locally) optimal trajectory. This includes velocities and accelerations.
Even though the optimal solution is naturally not known in advance, we can make some assumptions about the MAV dynamics that reduce the convergence time and avoid unfeasible local minima.
Thus, we initialize the trajectory optimizer with the planned path, which is optimal given the plan discretization and dimensionality.
We need to re-parameterize the planned path from a discrete-space to a discrete-time representation.
The number of resulting trajectory points and their distribution over time is estimated by an acceleration-bound simple motion model that can be calculated in closed-form~\cite{nieuwenhuisen2016iros}.

To enforce the visibility constraints, we look at the local path triangles defined by a segment between two trajectory points $\Theta_{i-1}$, $\Theta_{i}$ and its projection to the x-y-plane $\Theta_{i-1,xy}$, $\Theta_{i,xy}$.
Let $\Theta_{i-1}$ and $\Theta_i$ be two consecutive trajectory points in $\Theta^d$.
Then the visibility constraint for an sensor apex angle of $\phi$ is defined as
\begin{equation}
  \left|\atantwo{\left( \Theta_{i,z} - \Theta_{i-1,z}, \norm{ \Theta_{i,xy} - \Theta_{i-1,xy} } \right)}\right| \leq \frac{\phi}{2}.
\end{equation}

\begin{figure}[t]
  \centering
  \setlength{\figureheight}{0.4\linewidth}
  \includegraphics[height=\figureheight]{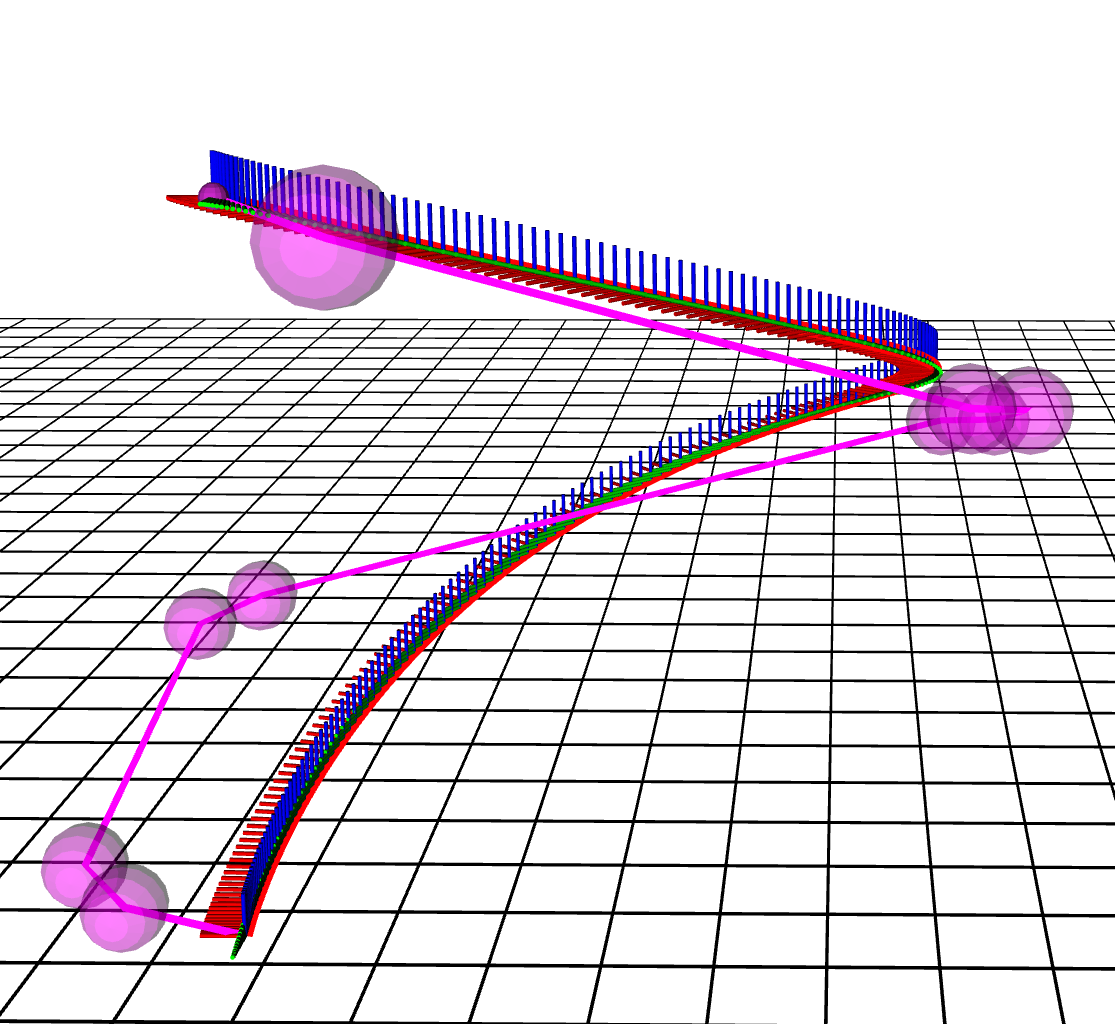}~
  \includegraphics[height=\figureheight]{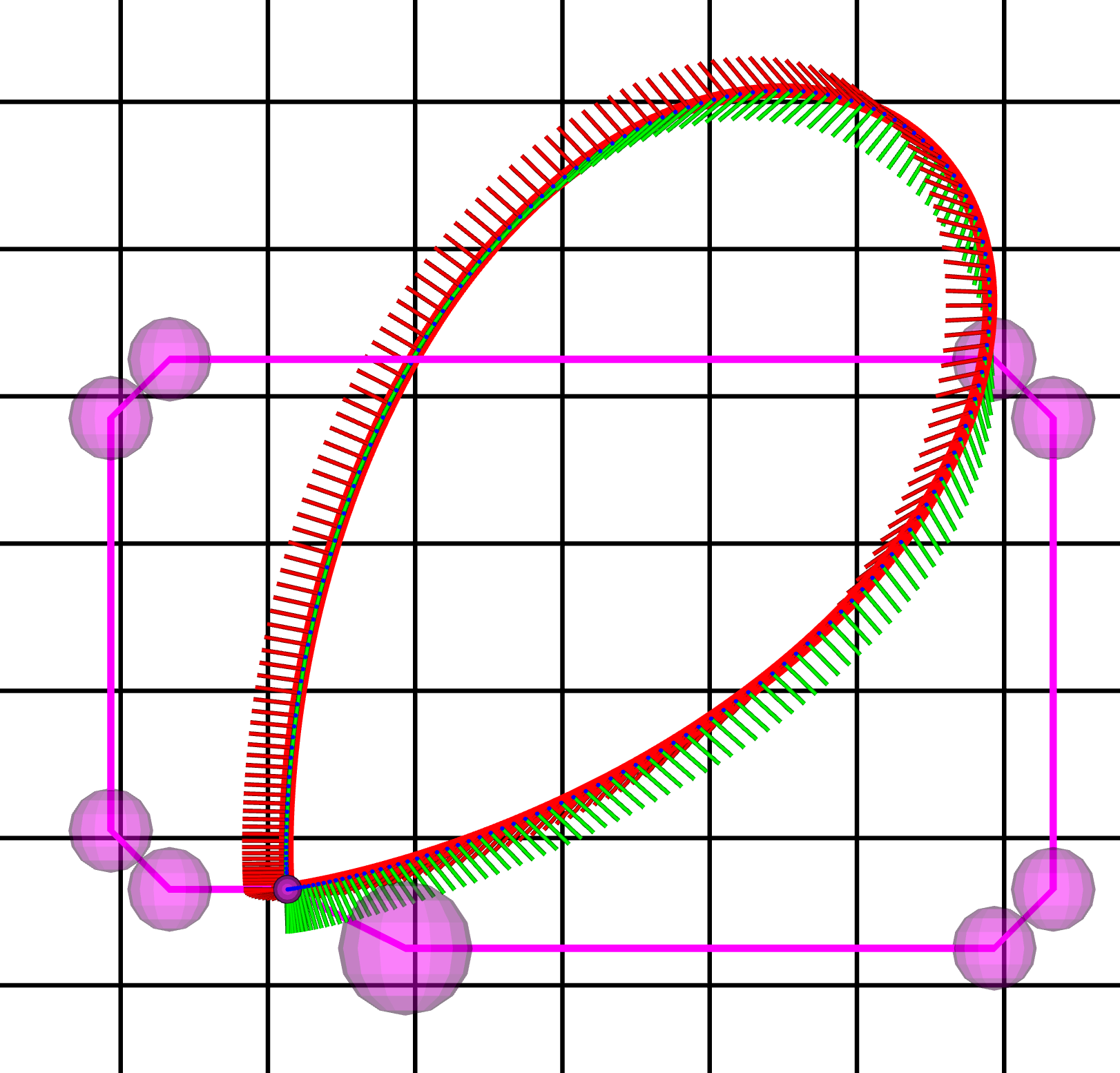}
  \caption{Optimized trajectory for an ascent in place. We initialize the trajectory optimization with a planned path (purple) with transition segments (purple spheres). The result after optimization yields a smooth spiral (colored axes). Left: Perspective. Right: Top-down ortho projection.}
  \label{fig:spiral_up}
  \vspace{-3ex}
\end{figure}

If this constraint is violated, we locally modify the trajectory points to flatten the path triangle.
Simultaneously, we reduce the altitude difference $d_z$ and stretch the movement in the x-y-plane, depicted in \reffig{fig:flattening}.
The partial gradients $g_{i-1}$ and $g_i$ to modify the trajectory points are defined as
\begin{align}
  d_{min,xy} &= \frac{\left| d_z \right| - d_{max,z}}{\tan(\phi / 2)} - \norm{ \Theta_{i,xy} - \Theta_{i-1,xy} }\\
  g_{i-1,x} &= w_v \cos( \alpha ) \frac{d_{min,xy}}{2}\\
  g_{i-1,y} &= w_v \sin( \alpha ) \frac{d_{min,xy}}{2}\\
  g_{i-1,z} &= w_v \sgn(-d_z) \frac{\left|d_z\right| - d_{max,z}}{4}\\
  g_{i} &= -g_{i-1},
\end{align}
where $w_v$ is a weighting factor and $\alpha$ is the direction angle of the path segment projected to the x-y-plane, 
$d_{min,xy}$ denotes the minimum planar distance to reach the angular constraint with a given $d_z$, and $d_{max,z}$ is the maximum allowed distance in z to reach the constraint with a given planar distance.
Thus, half of the constraint violation is distributed to the altitude gradients and the other half is used to elongate the path.
As a result, the optimized paths can lunge out to reduce the ascent/descent angles.
\reffig{fig:spiral_up} shows the resulting trajectory for an ascent in place.
Please note that the sensor visibility constraint is satisfied along the whole trajectory if it is satisfied in the discrete trajectory points by construction.

During flight, the trajectory is continuously re-optimized to cope with newly perceived obstacles.
The general approach is detailed in~\cite{nieuwenhuisen2016iros}.
With up to \SI{10}{\hertz}, the optimizer is initialized with the current remaining flight trajectory shortened by the estimated reoptimization duration.
The reoptimization duration estimate is based on the last duration as it is dominated by the remaining trajectory length which gets shorter during flight.
To account for small differences in the duration of single iterations, we add \SI{10}{\percent} overhead.
Finally, the reoptimized part of the trajectory is merged with the currently executed trajectory.

\section{Evaluation}
With our approach, the ascent and descent angles of the trajectories are bounded by the FoV of the onboard sensor.
When ascending or descending in place, as depicted in \reffig{fig:spiral_up}, the shortest path yields angles close to \SI{90}{\degree} for the whole flight, clearly not covered by the onboard sensor.
The resulting spiral motion after optimization facilitates a very smooth ascent with angles always close to the allowed maximum.

\begin{figure}
  \centering
  \includegraphics[width=0.8\linewidth]{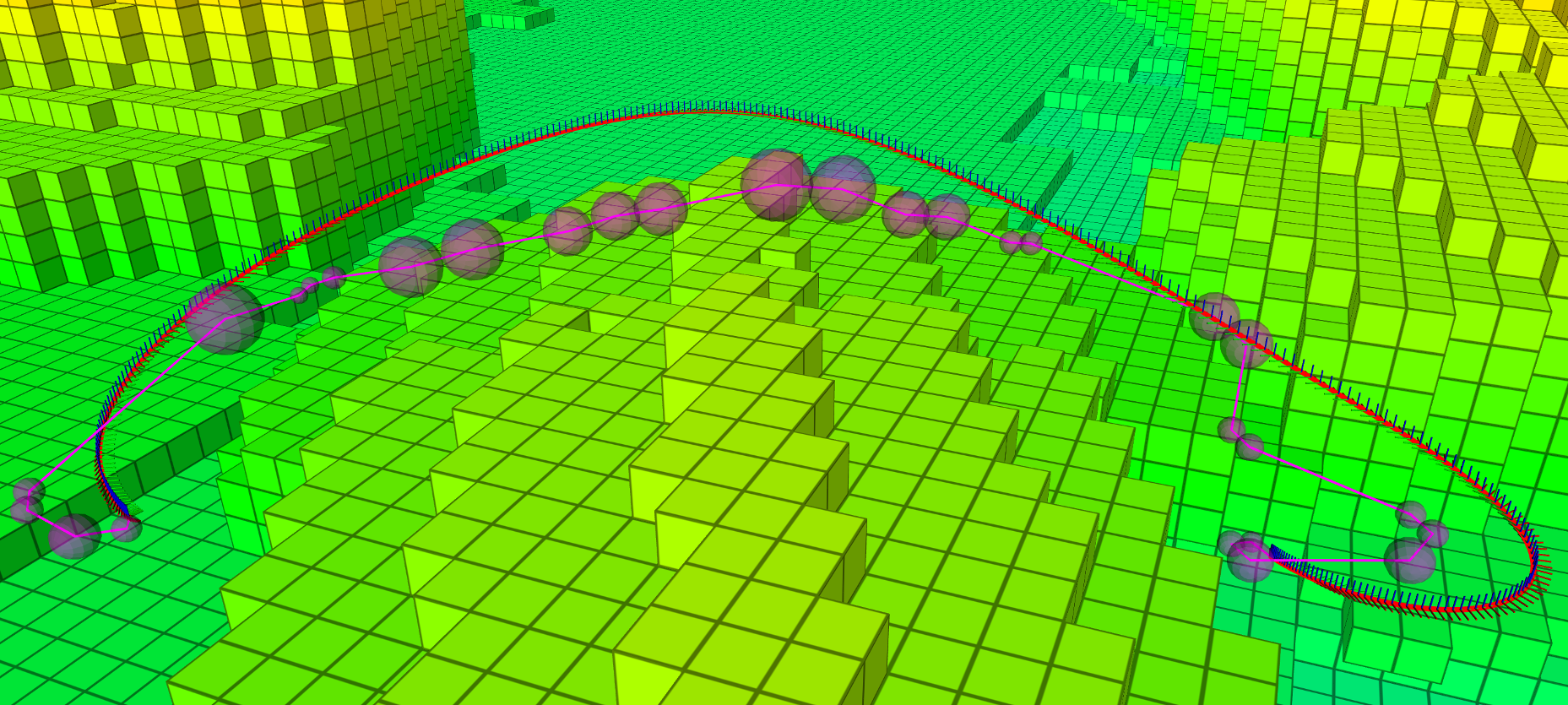}\\[0.03in]
  \includegraphics[width=0.8\linewidth]{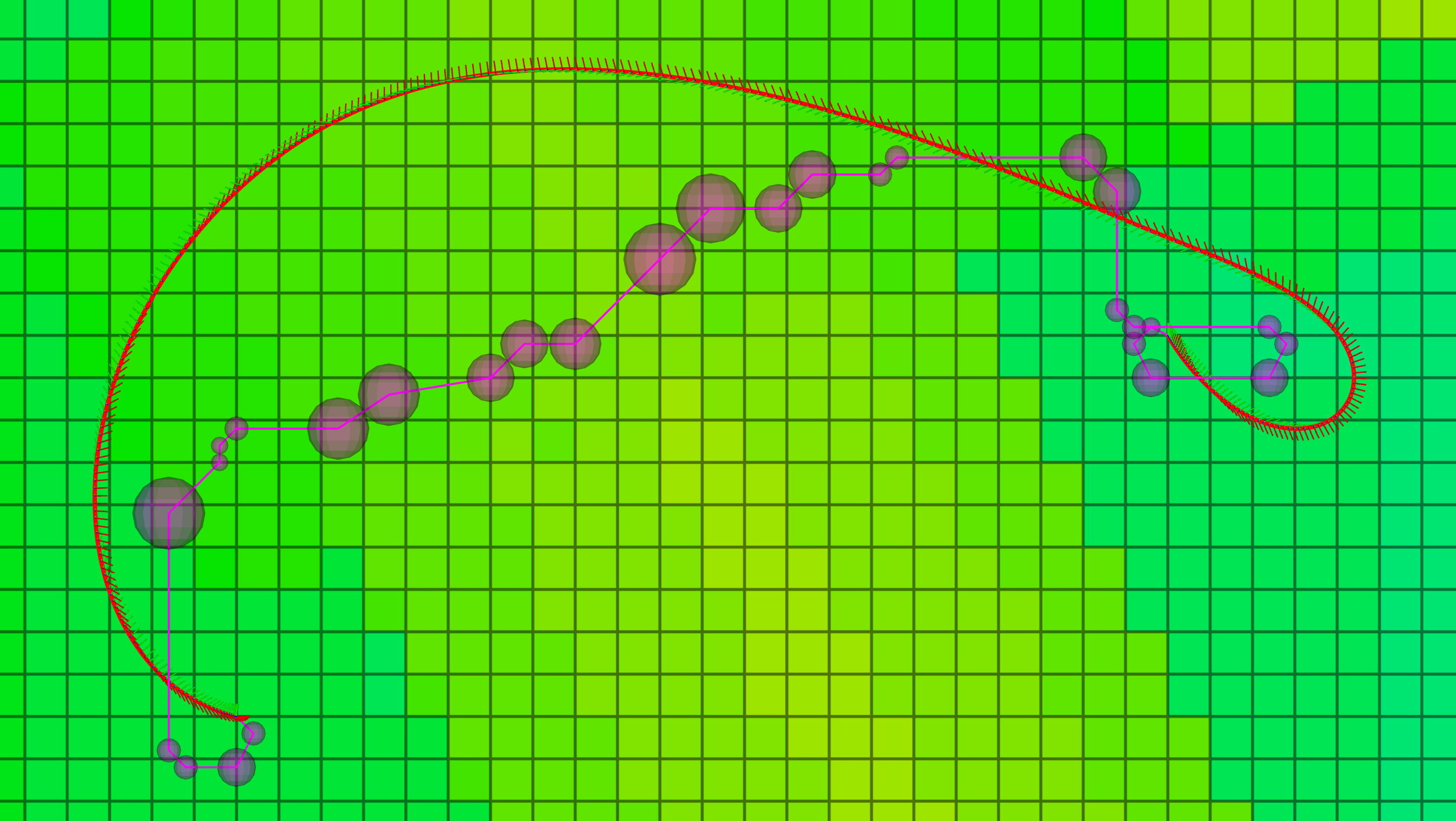}
  \vspace*{-1ex}
  \caption{Plan and trajectory in outdoor map. Whereas the planned path (purple) is more compact and shorter, the optimized trajectory allows higher velocities due to a smoother flight path.}
  \label{fig:frankenforst-long}
\end{figure}

\begin{figure}
  \setlength{\figureheight}{0.33\linewidth}
  \centering
  \includegraphics[height=\figureheight]{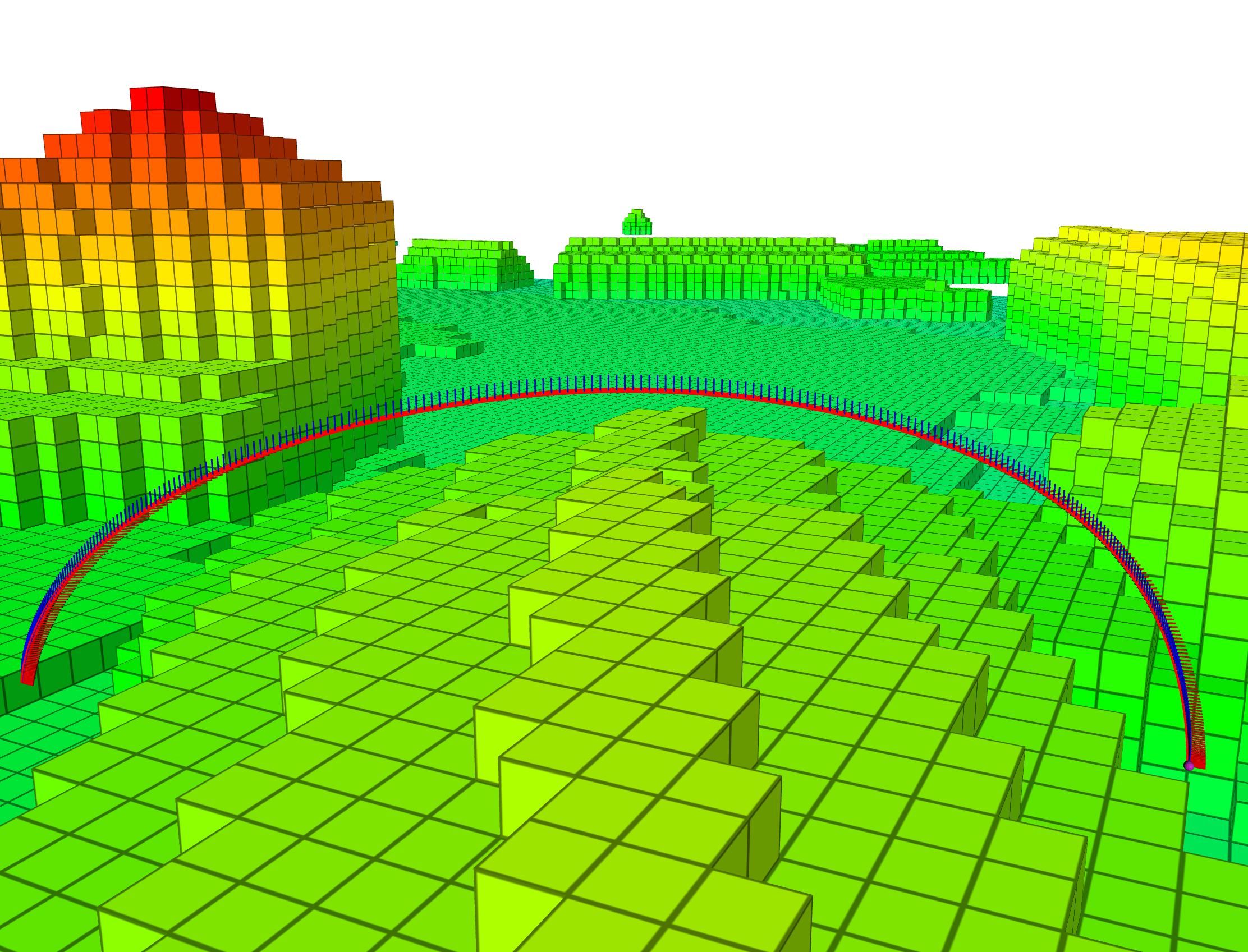}\,
  \includegraphics[height=\figureheight]{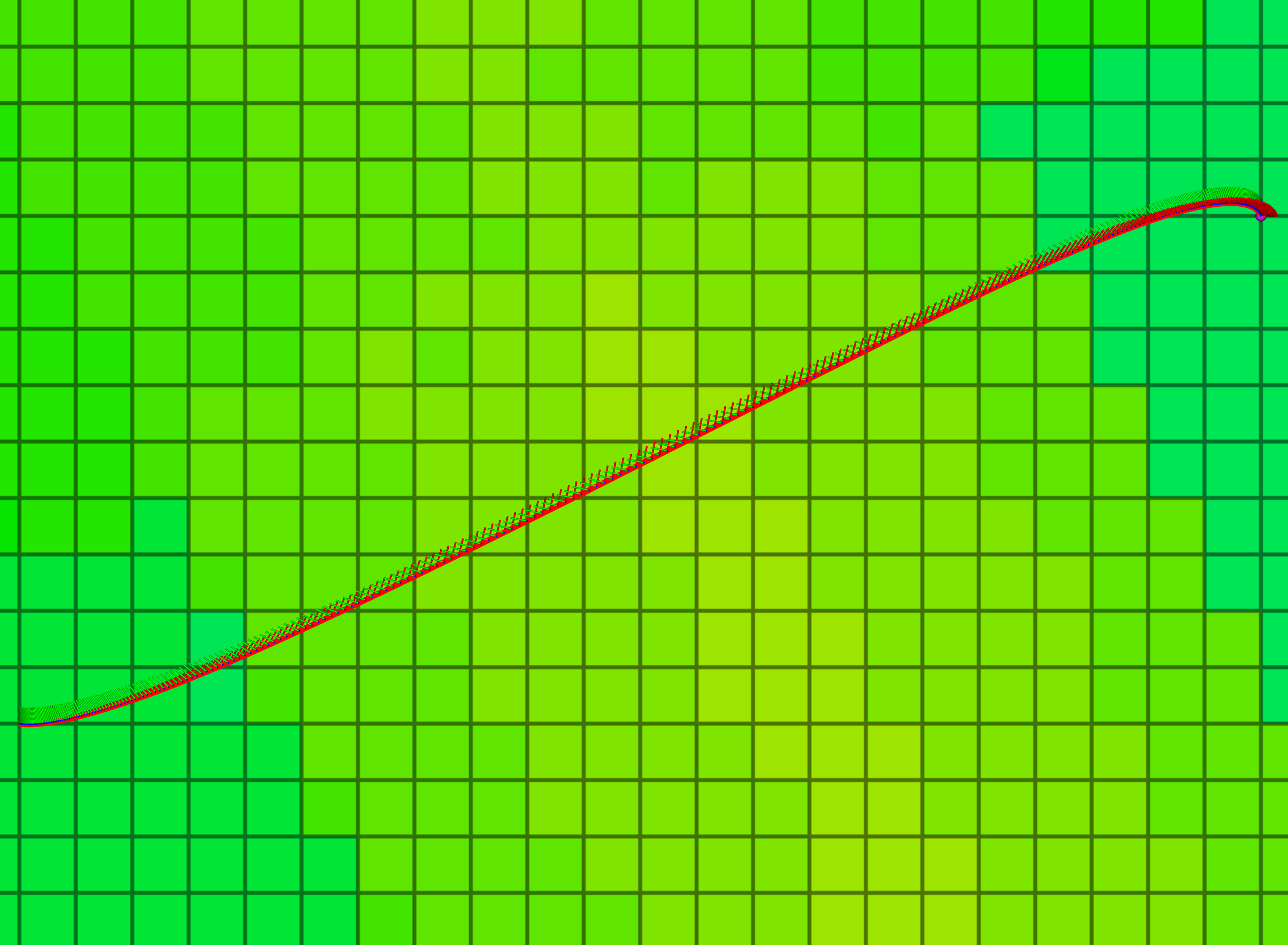}
  \vspace*{-1ex}
  \caption{Trajectory without visibility constraints. The optimized trajectory passes the higher part of the building with a single arc.}
  \label{fig:frankenforst-no-constraints}
  \vspace{-2ex}
\end{figure}

A more realistic example is depicted in \reffig{fig:frankenforst-long}.
The map contains a small village, where buildings block the line-of-sight between start and target poses.
As the start pose is close to an L-shaped building, the MAV has to fly away from the facade first (right side of \reffig{fig:frankenforst-long}) and perform a partial spiraling motion to gain altitude.
After passing the building through a cut-in between higher parts of the roof, the descent is smoothly distributed along the remaining trajectory.
In comparison to the planned path---which is also valid \wrt visibility constraints---the optimized trajectory can be flown at higher velocity since it does not contain sharp turns.
Thus, the optimized trajectory is less compact.
\reffig{fig:frankenforst-no-constraints} shows the optimized trajectory without constraints as a reference.
The corresponding angles between consecutive trajectory points for both examples are depicted in \reffig{fig:long_angles}.
It can be seen that without constraints, the trajectory goes up nearly vertical and then reduces the ascent angle nearly linearly until descending nearly vertical again.
The visibility constraints are violated for approximately \SI{75}{\percent} of the flight time, resulting in a large collision hazard.
With enabled visibility constraints the ascent and descent are within the maximum allowed band.

\begin{figure}[t]
  \centering
  \includegraphics[width=0.7\linewidth]{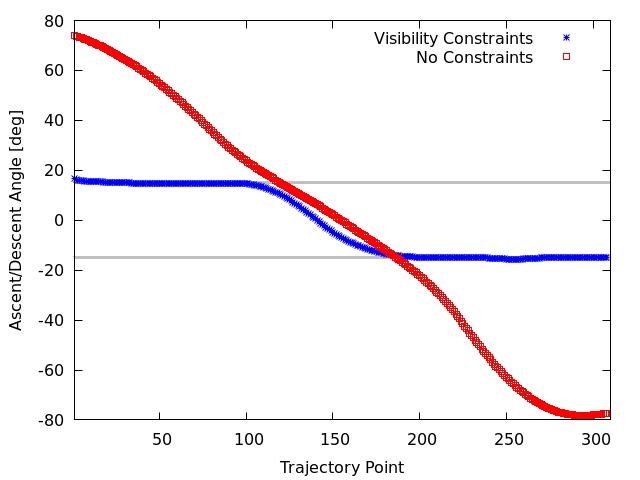}
  \vspace{-2ex}
  \caption{Angles for outdoor trajectory. Without constraints, the ascent and descent angles of the MAV trajectory, depicted in \reffig{fig:frankenforst-no-constraints}, change nearly linear from \SI{75}{\degree} to \SI{-80}{\degree} (red) caused by an arc-shaped trajectory over the building in the way. With enabled visibility constraints, the trajectory (see \reffig{fig:frankenforst-long}) is divided into an ascent, flight, and descent phase (blue). The angles stay within the band defined by the FoV of the sensor (gray lines).}
  \label{fig:long_angles}
\end{figure}

\begin{figure}[t]
  \setlength{\figureheight}{0.4\linewidth}
  \centerline{
  \begin{tikzpicture}[font=\sffamily]
    \node{
      \includegraphics[width=0.49\linewidth]{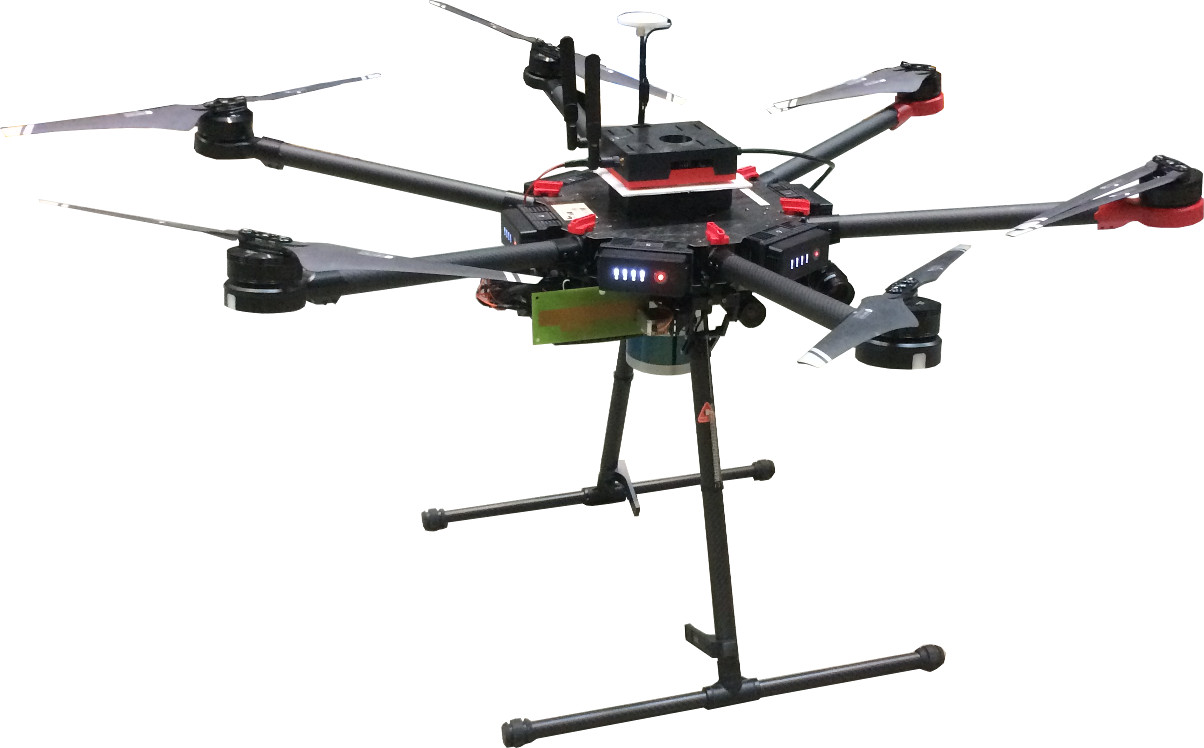}
    };
    \node[font=\footnotesize](label) at (-1.3,-0.8) {Laser scanner};
    \draw[thick,-,color=red](label)--(0.15,-0.0);
  \end{tikzpicture}
  \includegraphics[width=0.46\linewidth,clip,trim=0 0 100 100]{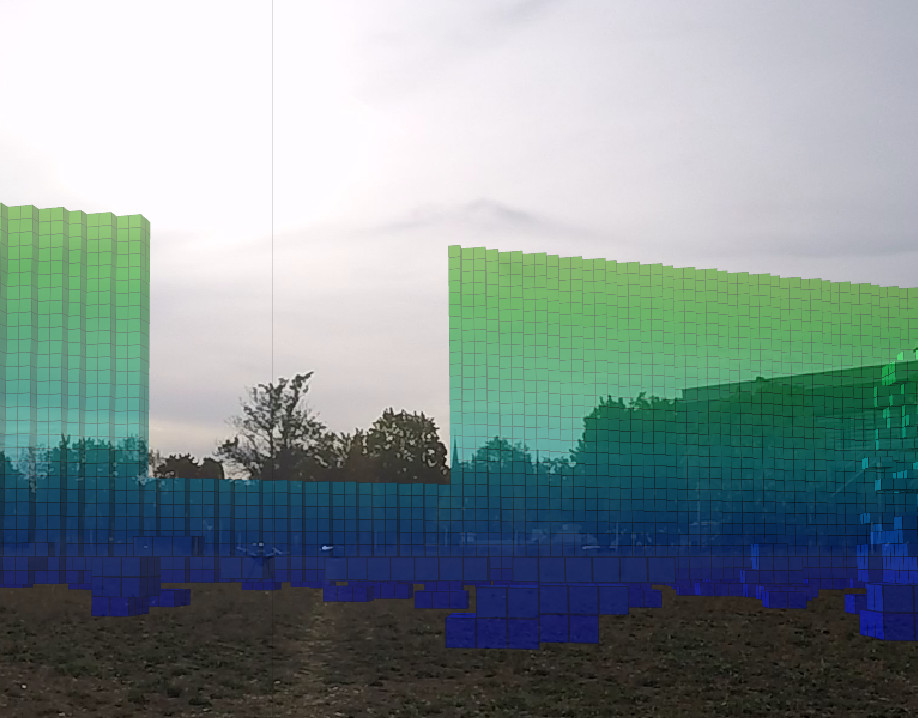}
}
  \vspace{-1ex}
  \caption{For our real-robot experiments, we employ a DJI Matrice 600 MAV (left). For obstacle avoidance, the MAV is equipped with a Velodyne Puck Lite 3D laser scanner with a vertical apex angle of \SI{30}{\degree}. The test environment is augmented with artificial obstacles (right).}
  \label{fig:m600}
  \vspace{-3ex}
\end{figure}

We evaluate the applicability of our approach for MAV control with our DJI Matrice 600 MAV~\cite{beul2018inventairy}, depicted in \reffig{fig:m600}.
In addition to outdoor experiments, we employ a hardware-in-the-loop (HIL) simulator provided by the MAV manufacturer DJI.
The optimized trajectories are executed by an MPC~\cite{beul2017icuas}.
Input to the controller are the next trajectory point position and velocity with \SI{10}{\hertz}.
The commands are sent open loop according to the calculated timings.
By interception prediction, the controller is able to track the trajectory accurately.

We report absolute trajectory errors (ATE) between optimized trajectories and the pose estimates of the MAV during simulated flight in \reftab{tab:ates}.
The ATEs are averaged over ten flights per example.
Spiral and Flight~1 are the trajectories depicted in \reffig{fig:spiral_up} and \reffig{fig:frankenforst-long}, respectively.
Flight~2 and Flight~3 are longer trajectories with different start and end points in the same map.
The MAV reaches velocities of up to \SI{2.43}{\meter\per\second} from an allowed maximum of \SI{3}{\meter\per\second} in the controller.
Thus, the resulting trajectories are within the dynamic limits of the MAV without slowing down the MAV too much.

\begin{figure}[t]
  \centering
  \setlength{\figureheight}{0.38\linewidth}
  \includegraphics[height=\figureheight]{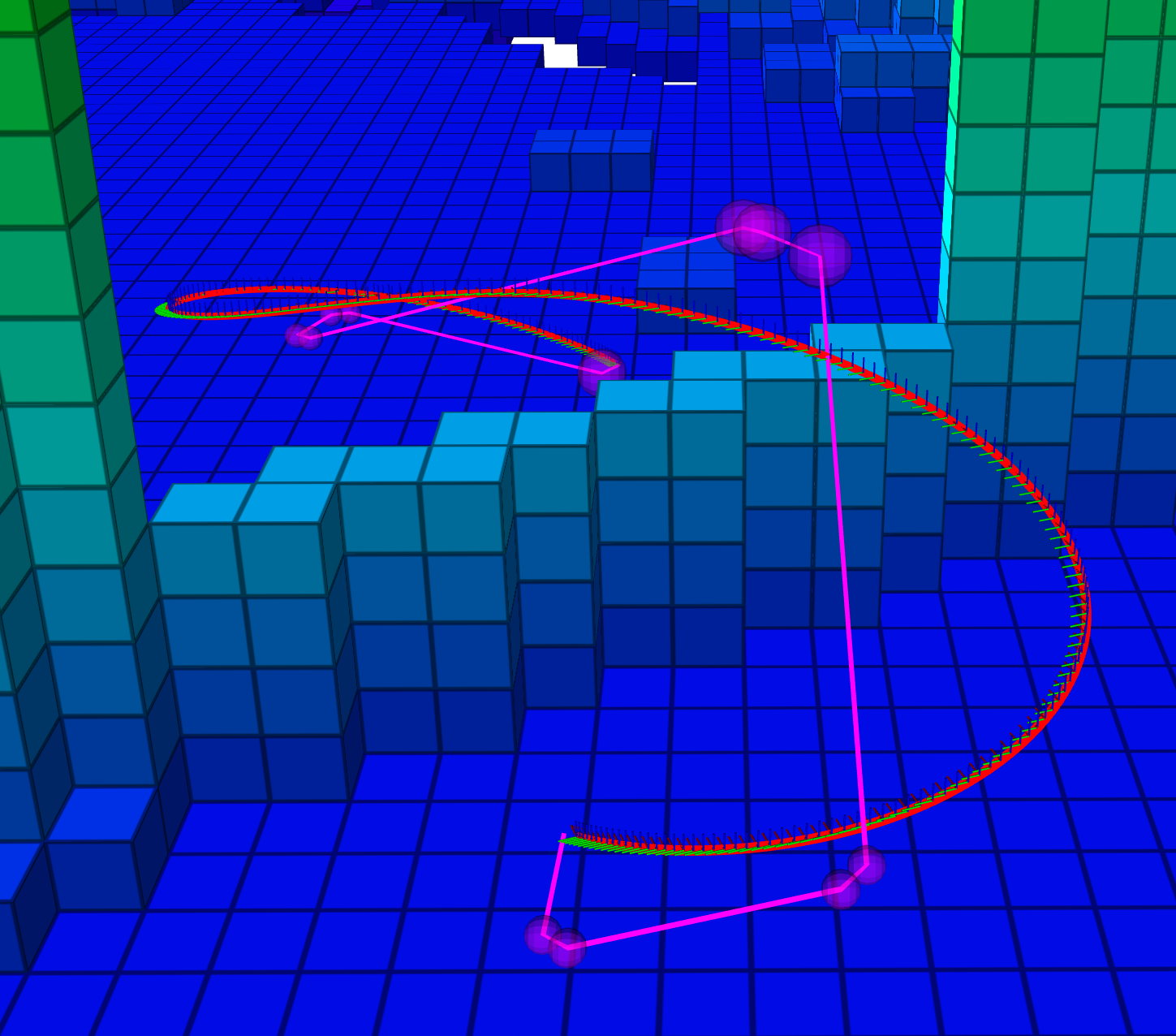}~
  \includegraphics[height=\figureheight]{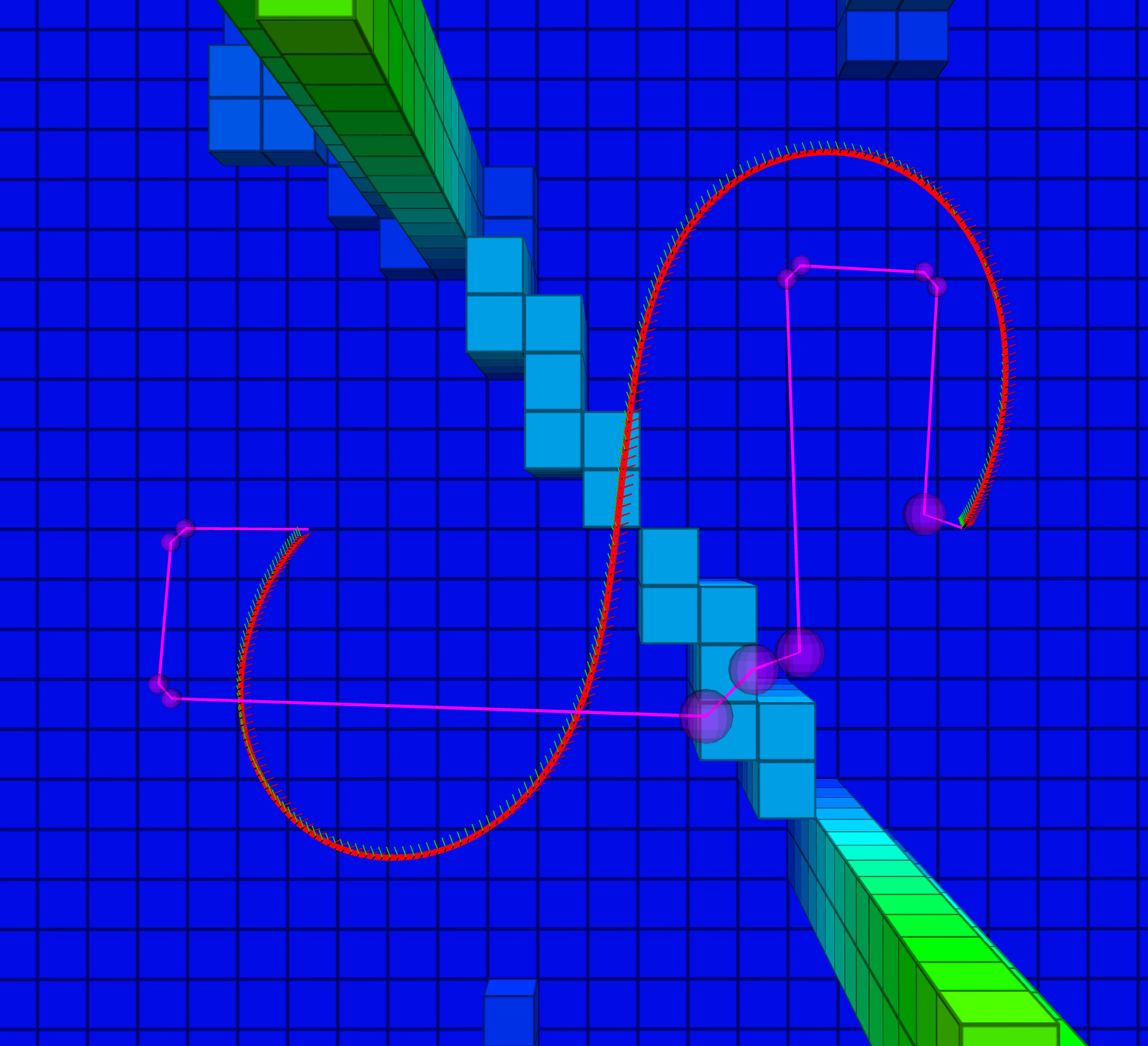}
  \vspace{-1ex}
  \caption{Example of a real-world experiment. Our MAV plans and optimizes a trajectory to overcome an artificial obstacle (flight from front/left to rear/right). The optimized trajectories are successfully followed by our Matrice 600 MAV. The depicted voxels have an edge length of \SI{1}{\meter}.}
  \label{fig:real-world}
\end{figure}

\begin{table}[t]
  \centering
  \caption{Simulation absolute trajectory errors (ATE).}
  \vspace*{-1ex} \begin{tabular}{lcccc}
    \toprule
    & Spiral & Flight 1 & Flight 2 & Flight 3\\
    \midrule
    ATE & 0.22 & 0.46 & 0.59 & 0.67\\
    RMSE & 0.14 & 0.30 & 0.34 & 0.37\\
    $v_{max}$ & 1.22 & 2.43 & 2.26 & 2.34\\
    \bottomrule
  \end{tabular}\\
  \vspace*{0.75ex}{\scriptsize ATE during trajectory execution (in m) averaged over ten flights. \\
  $v_{max}$ is the maximum velocity along the trajectory in m/s.}
  \label{tab:ates}
  \vspace{-4ex}
\end{table}

The outdoor experiments were performed in free-space augmented with artificial obstacles in the map.
\reffig{fig:real-world} shows an example with a high wall with an opening at a height of \SI{4}{\meter}.
To overcome the wall without violating the sensor FoV constraint, the MAV flies two connected partial spirals.
A second performed experiment includes an artificial wall with a uniform height of \SI{4}{\meter}.
In these experiments, the MAV plans and optimizes two qualitatively different trajectories---depending on the exact start condition.
The trajectories can either be of a shape comparable to the experiment with the opening or have a U-shape with roughly straight ascent and descent segments.
The third conducted experiment is an ascent in place similar to the spiral depicted in \reffig{fig:spiral_up}.

For state estimation in these experiments, we employ the onboard filter of the DJI flight control incorporating GPS and IMU measurements.
As no ground truth apart from this is available, the ATEs reported in \reftab{tab:ates-real-world} represent the trajectory tracking error based on the onboard state estimate.

\begin{figure}[t]
  \centering
  \includegraphics[width=0.8\linewidth,clip,trim=70 160 0 150]{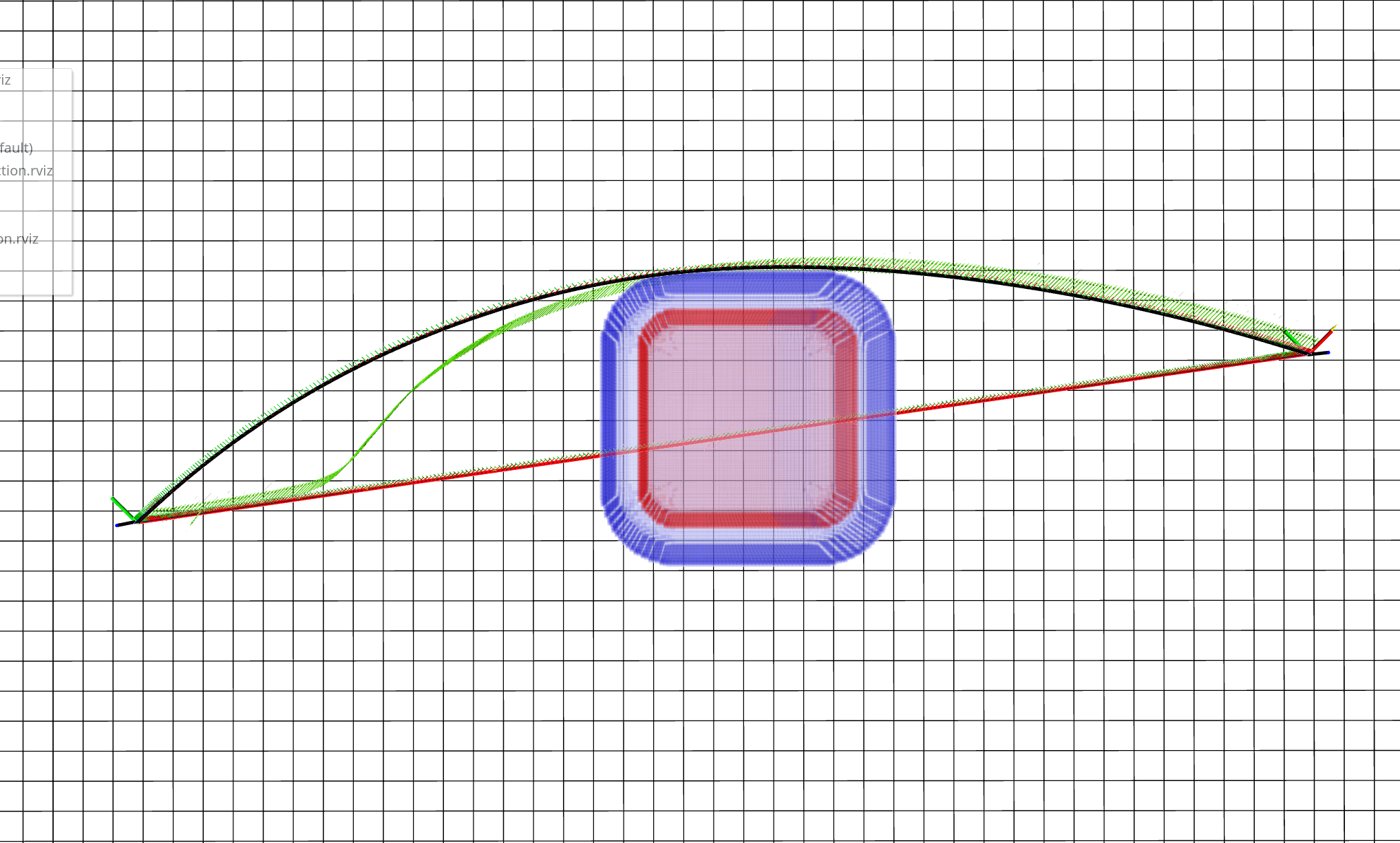}
  \vspace{-1ex}
  \caption{Continuous reoptimization allows for navigating around previously unknown obstacles. The red line depicts the initial trajectory; the green arrows depict the actual flown trajectory. The black line shows the resulting optimized trajectory if the obstacle is known in advance for reference. The obstacle is depicted by the isosurfaces for minimal and safe distance. The flight direction is from left to right.}
  \label{fig:reopt}
\end{figure}

\begin{table}[t]
  \centering
  \caption{Real-MAV ATEs during trajectory execution.}
  \vspace*{-1ex}
  \begin{tabular}{lccc}
    \toprule
    & Spiral & Wall & Opening \\
    \midrule
    ATE & 0.29 & 0.26 & 0.17 \\
    RMSE & 0.41 & 0.28 & 0.19 \\
    $v_{max}$ & 1.89 & 1.79 & 1.60 \\
    \bottomrule
  \end{tabular}\\
   \vspace*{0.75ex}{\scriptsize The ATEs are for individual flights. $v_{max}$ is the maximum\\  reached velocity along the trajectory in m/s.}
  \label{tab:ates-real-world}
  \vspace{-3ex}
\end{table}

Our tailored heuristic has the largest impact on the number of expanded nodes in the A* search, if the major difference between start and goal pose is a change in altitude.
For an ascent of \SI{7}{\meter} in place using an Euclidean distance heuristic results in \num{943505} node expansions.
Our FoV-aware heuristic reduces the number of expanded nodes to \num{285411}, which is approximately \SI{30}{\percent} of the baseline.
For the trajectories depicted in \reffig{fig:frankenforst-long} and \reffig{fig:real-world} the node expansions compared to the baseline are reduced to \SI{63}{percent} (\num{5907649} vs. \num{9443491} expansions) and \SI{88}{\percent} (\num{3766025} vs. \num{4255730} expansions), respectively.

We evaluate the reoptimization capabilities by placing an unknown cuboid obstacle of size \SI{4x4x4}{\meter} randomly with its center point within a corridor with radius \SI{1}{\meter} to the the line of sight between the start and goal pose of the MAV which is the initial best trajectory.
The scanner range of the MAV is reduced to \SI{15}{\meter} to avoid early detection of the obstacle.
\reffig{fig:reopt} shows the initial optimized trajectory and the actual flown trajectory with reoptimization for an example experiment.
With \num{10} iterations per reoptimization it took on average \SI{110}{\milli\second} depending on the remaining trajectory length, with a maximum of \SI{500}{\milli\second} for the full trajectory.
This is sufficient to find a feasible trajectory in a safe distance while approaching the obstacle.
Further reduction of this duration is possible with the multiresolution techniques from~\cite{nieuwenhuisen2016iros}, which we did not employ here.
The timings were measured on the MAV onboard PC.

The supplemental video shows footage of our outdoor experiments and results from the simulation experiments\footnote{\url{ais.uni-bonn.de/videos/ICRA_2019_Nieuwenhuisen}}.

\section{Conclusion}
Planning MAV trajectories imposes new challenges due to the ability for omnidirectional movement not only in the plane, but also in height.
Whereas the environment for ground vehicles can be covered relatively well with onboard obstacle sensors, the movement directions combined with a limited payload
prohibits complete and high-frequency coverage of the space around an MAV for many applications.
Our combined planning and trajectory optimization approach is capable to plan allocentric paths within the FoV of planar omnidirectional 3D sensors with a restricted apex angle in z-direction, \eg the popular Velodyne Puck Lite laser scanner.
The optimized trajectories are thus safe and dynamically feasible.
We showed that an MAV is able to follow these trajectories with an MPC in real-world experiments with our Matrice 600 MAV and in simulation employing a DJI flight control unit in the loop.

\balance
%%%%%%%%%%%%%%%%%%%%%%%%%%%%%%%%%%%%%%%%%%%%%%%%%%%%%%%%%%%%%%%%%%%%%%%%%%%%%%%%
\bibliographystyle{IEEEtran}
\bibliography{literature_references}

\end{document}